\documentclass{article}

\usepackage{arxiv}

\usepackage[utf8]{inputenc} 
\usepackage[T1]{fontenc}    
\usepackage{hyperref}       
\usepackage{url}            
\usepackage{booktabs}       
\usepackage{amsfonts}       
\usepackage{amsmath}
\usepackage{amssymb}
\usepackage{amsthm}
\usepackage{stmaryrd}
\usepackage{mathtools}
\usepackage{nicefrac}       
\usepackage{microtype}      
\usepackage{lipsum}		
\usepackage{graphicx}
\usepackage{natbib}
\usepackage{doi}

\usepackage{svg}
\usepackage{booktabs}
\usepackage{bm}

\newcommand{\uniteandlead}{\textsc{UnL}}
\newcommand{\textcumulative}{\texttt{Cumulative}}
\newcommand{\cumulative}[4]{\ensuremath{\mathtt{Cumulative}(#1, #2, #3, #4)}}
\newcommand{\capacity}[3]{\ensuremath{\mathcal{L}(#1, #2, #3)}}
\newcommand{\disjunctive}{\texttt{Disjunctive}}

\newcommand{\occ}[2]{\ensuremath{\langle {#1} \mid {#2} \rangle}}

\newcommand{\poly}[2]{\ensuremath{\mathcal{P}(#1,#2)}}

\DeclareMathOperator*{\argmin}{arg\,min}
\DeclareMathOperator*{\argmax}{arg\,max}

\AtBeginEnvironment{definition}{\DeclareEmphSequence{\bfseries\itshape}}
\AtBeginEnvironment{lemma}{\DeclareEmphSequence{\bfseries\itshape}}
\AtBeginEnvironment{theorem}{\DeclareEmphSequence{\bfseries\itshape}}
\AtBeginEnvironment{proposition}{\DeclareEmphSequence{\bfseries\itshape}}

\newcommand{\pumpkin}{\ifx\authoranonymous\relax \textcolor{red}{solver}\else \text{Pumpkin}\fi\xspace}
\newcommand{\Pumpkin}{\ifx\authoranonymous\relax \textcolor{red}{Solver}\else \text{Pumpkin}\fi\xspace}
\newcommand{\PumpkinUrl}{\ifx\authoranonymous\relax \textcolor{red}{Anonymous URL}\else \url{https://github.com/ConSol-Lab/Pumpkin}\fi}
\newcommand{\delftblue}{\ifx\authoranonymous\relax \textcolor{red}{an HPC cluster}\else \text{DelftBlue}~\citep{DelftHighPerformanceComputingCentre2024-Other}\fi\xspace}

\newcommand{\nlowerbounds}{25}
\newcommand{\nupperbounds}{five}
\newcommand{\nsearchless}{eight}

\usepackage{float}
\usepackage{subcaption}

\usepackage{xcolor}
\definecolor{Comment}{RGB}{1,80,50}
\usepackage[noend]{algorithm2e}

\SetCommentSty{mycommfont}
\SetKw{Continue}{continue}
\SetKw{Break}{break}
\SetKw{True}{true}

\usepackage{thmtools}
\usepackage{cleveref}

\Crefname{algocf}{Algorithm}{Algorithms}
\AddToHook{cmd/appendix/before}{\crefalias{section}{appendix}}
\newtheorem{theorem}{Theorem}
\newtheorem{lemma}[theorem]{Lemma}
\newtheorem{proposition}[theorem]{Proposition}
\newtheorem{definition}[theorem]{Definition}
\theoremstyle{definition}
\newtheorem{example}[theorem]{Example}
\theoremstyle{remark}
\newtheorem{note}[theorem]{Note}

\title{On inferring cumulative constraints}

\author{
\href{https://orcid.org/0009-0009-0655-4200}{\includegraphics[scale=0.06]{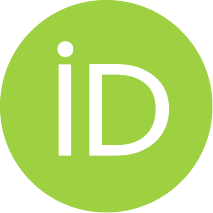}\hspace{1mm}Konstantin Sidorov} \\
	Faculty of Electrical Engineering,\\ Mathematics \& Computer Science \\
	Delft University of Technology\\
	Delft, The Netherlands \\
	\texttt{k.sidorov@tudelft.nl}
}

\renewcommand{\shorttitle}{On inferring cumulative constraints}

\hypersetup{
pdftitle={On inferring cumulative constraints},
pdfsubject={cs.AI,math.OC},
pdfauthor={Konstantin Sidorov},
pdfkeywords={Constraint programming, scheduling, cumulative, lifting, valid inequalities},
}

\begin{document}
\maketitle

\begin{abstract}
Cumulative constraints are central in scheduling with constraint programming, yet propagation is typically performed per constraint, missing multi-resource interactions and causing severe slowdowns on some benchmarks. I present a preprocessing method for inferring additional cumulative constraints that capture such interactions without search-time probing.
This approach interprets cumulative constraints as linear inequalities over occupancy vectors and generates valid inequalities by (i) discovering covers, the sets of tasks that cannot run in parallel, (ii) strengthening the cover inequalities for the discovered sets with lifting, and (iii) injecting the resulting constraints back into the scheduling problem instance. Experiments on standard RCPSP and RCPSP/max test suites show that these inferred constraints improve search performance and tighten objective bounds on favorable instances, while incurring little degradation on unfavorable ones. Additionally, these experiments discover \nlowerbounds{} new lower bounds and \nupperbounds{} new best solutions; eight of the lower bounds are obtained directly from the inferred constraints.
\end{abstract}

\keywords{Constraint programming \and Scheduling \and Cumulative \and Lifting \and Valid inequalities}

\section{Introduction}
\label{section:intro}

Cumulative constraints and their use in scheduling problems are one of the longstanding success stories of constraint programming (CP), serving as a central abstraction and a source of powerful inferences about resource-constrained scheduling problems. A large body of work has demonstrated how powerful propagation on a single cumulative resource can be, including effective detection of overload situations, pruning based on resource envelopes, and increasingly sophisticated explanations and propagator combinations. All these developments have contributed to the state-of-the-art performance of CP solvers on industrial scheduling benchmarks.

Nevertheless, most existing techniques in this space reason about one cumulative constraint at a time. Even when a model contains several resources, propagation is typically performed independently per resource, with indirect interactions only mediated through domain narrowing of start-time variables. However, the recent work by~\citet{Sidorov2025-Other} has made a first step to expose the limitations of this “single-resource viewpoint,” including the \emph{\( 3n \) problem} structure, a pathological case when an instance with three cumulative constraints is equivalent to a trivially infeasible single-machine scheduling problem but is intractable for a conventional lazy clause-generating solver, a paradigm subsuming many of the state-of-the-art CP codes.

While Sidorov et al. have addressed a special (disjunctive) case of this problem with a \emph{Unite and Lead} (further referred to as \uniteandlead{}) approach and managed to discover large hidden structures in several known research benchmarks, their technique has two fundamental limitations.
First, while it succeeds at discovering disjunctive structures (that is, the collections of tasks that cannot be run in parallel), it does not support any extensibility for deriving more general multi-resource interactions. Second, \uniteandlead{} has to explicitly probe for disjunctive cliques throughout the search, incurring significant overhead even in instances where no such structure exists. Since the clique search overhead is not offset by stronger reasoning on instances without a disjunctive structure, the solver performance deteriorates sharply on such problems. As a result, Sidorov et al. show that, when \uniteandlead{} does not improve the search, it commonly slows down the solver by more than an order of magnitude.

In this paper, I address these limitations by proposing a new algorithmic framework for inferring cumulative constraints at the root search node. Instead of restricting attention to disjunctive reasoning, this approach operates over cumulative constraints as \emph{linear inequalities} on a special representation of start time variables (occupancy vectors), and reasoning across multiple resources reduces to \emph{generating valid cutting planes} in the same sense as it is understood in integer programming. This viewpoint subsumes a wide range of multi-constraint reasoning techniques from earlier work, including the version of \uniteandlead{} restricted to the root node and classical lower-bound techniques for RCPSPs.

Next, I leverage this connection by describing a workflow for inferring cumulative constraints based on the idea of \emph{lifting}~\citep{Padberg1975-operRes}. Lifting is a general-purpose technique from mixed-integer programming for tightening the generated cutting planes \( \boldsymbol{a}^T \boldsymbol{x} \le b \) by picking a variable \( y \) not mentioned in the inequality and finding the largest \( \beta \) such that \( \boldsymbol{a}^T \boldsymbol{x} + \beta y \le b \) is still valid. I apply this idea by (i) discovering sets of tasks \( C \) that cannot be run in parallel, (ii) lifting the constraint of the form “tasks in \( C \) cannot consume more than \( (\#C - 1) \) units of resource,” and (iii) introducing the lifted constraints as new cumulative constraints into the model.

I evaluate the method as a preprocessing step for two state-of-the-art CP solvers on a suite of standard RCPSP benchmarks. The results suggest that the approach described in this paper successfully retains most of the positive traits of \uniteandlead{} (substantial gains on instances with hidden structure), but does so without inheriting its instability. This is helpful for advancing the best-known bounds for the benchmark instances: with the presented approach, I report the improved schedules for \nupperbounds{} benchmarks and tighten the lower bounds for \nlowerbounds{} instances, with \nsearchless{} of them not requiring any search to verify the correctness of the new bound.

The remainder of the paper is organized as follows. \Cref{section:preliminaries} introduces the notions of constraint programming and polyhedral geometry that are necessary for the rest of the presentation. \Cref{section:occupancy-ineq} establishes the connection between CP models with cumulative constraints and the notion of valid inequality. \Cref{section:approach} presents the lifting-based approach to inferring cumulative constraints. I evaluate this approach in \Cref{section:experiments} from two angles: (a) as a preprocessing technique for adding cumulative constraints to the input instance before running a CP solver, and (b) as a technique for inferring cumulative constraints with unit capacity,\footnote{Also known as disjunctive or no-overlap constraints.} by comparing it with the \uniteandlead{} approach. \Cref{section:related-work} discusses earlier methods for joint reasoning on cumulative constraints and their connection with the lifting approach. Finally, \Cref{section:conclusions} concludes the narrative and discusses future research directions.

\section{Preliminaries}
\label{section:preliminaries}
\subsection{Constraint programming}

A constraint satisfaction problem (CSP) consists of a tuple \((\mathcal{X}, \mathcal{C}, \mathcal{D})\) where \(\mathcal{X}\) is the set of \emph{variables}, \(\mathcal{C}\) is the set of \emph{constraints} which specify the relations between variables, and \(\mathcal{D}\) is the \emph{domain} which specifies for each variable which values it can take~\citep{Rossi2006-Other}. A \emph{solution} \(\mathcal{I}\) is a mapping that maps each variable in \(\mathcal{X}\) to a single value in the domain of that variable in \(\mathcal{D}\) which satisfies all of the constraints in \(\mathcal{C}\).

Constraint programming (CP) is a paradigm for solving CSPs; CP solvers enforce constraints through \emph{propagators}, each represented with a subroutine that removes values from \(\mathcal{D}\) infeasible under the constraints in \(\mathcal{C}\). After applying the propagators, the solver makes a decision that creates several subproblems by splitting the domain of a variable into two or more parts. This process of applying propagators and making decisions is performed until either a solution \(\mathcal{I}\) is found, the problem is found to be unsatisfiable, or a termination criterion is met.

\textcumulative{} is a constraint useful for many scheduling problems to model limited renewable resources, such as available work-hours. In \Cref{def:cumulative}, the variable \( s_i \) encodes the start time of task $i$, and \( (s_i + d_i) \) encodes its finish time. Thus, we implicitly associate each task \( i \in T \) with an interval \( [s_i, s_i+d_i) \).
\begin{definition}
    \label{def:cumulative}
    Let \( T \) be a set of \emph{tasks}, and let a task $i \in T$ be defined by its start variable $s_i$, resource usage $r_i \in \mathbb{Z}_{\ge 0}$, and duration $d_i \in \mathbb{Z}_{\ge 0}$. Finally, let \( C \in \mathbb{Z}_{\ge 0} \) be the \emph{capacity} of the resource. Then the \( \cumulative{\boldsymbol{s}}{\boldsymbol{r}}{\boldsymbol{d}}{C} \) constraint is the condition that at any time point \( \tau \) the cumulative resource usage of intervals \( [s_i, s_i+d_i) \) covering \( \tau \) does not exceed the capacity: 
    \begin{equation}
        \label{eq:cumulative}
        \forall \tau \in \mathbb{Z}\ :\ \sum_{i \in T\ :\ s_i \le \tau < s_i + d_i} r_i \leq C.
    \end{equation}
\end{definition}

An important special case of \textcumulative{} is where the resource has unit capacity \( C = 1 \) and each task has resource usage \( r_i \in \{ 0, 1\} \), known as \disjunctive{}. This special case is significant because many inference procedures that are intractable for \textcumulative{} constraints can be efficiently executed for \disjunctive{} constraints. For example, determining satisfiability of a single \textcumulative{} constraint is NP-complete~\citep{Baptiste1999-annOperRes}, but determining satisfiability of a single \disjunctive{} constraint can be done in polynomial time~\citep[Section 3]{Vilim2004-Other-01}.

\subsection{Polyhedral geometry}

In this work, we consider sets of the form \( \poly{A}{\boldsymbol{b}} := \{ \boldsymbol{x} \in \{ 0, 1 \}^n :  A\boldsymbol{x} \le \boldsymbol{b} \} \); unless stated otherwise, \textbf{we assume that all coefficients are nonnegative integers}, that is, \( A \in \mathbb{Z}_{\ge 0}^{m \times n} \) and \( \boldsymbol{b} \in \mathbb{Z}_{\ge 0}^{m} \). As known from the integer programming theory~\citep{Ziegler1995-Other}, those sets can also be seen as vertices of some polyhedron \( \{ \boldsymbol{x} : A^* \boldsymbol{x} \le \boldsymbol{b}^*, \boldsymbol{0} \le \boldsymbol{x} \le \boldsymbol{1} \} \), albeit with a different set of constraints given by \( A^* \) and \( \boldsymbol{b}^* \). This motivates the following definition:

\begin{definition}
    An inequality \( \boldsymbol{\pi}^T \boldsymbol{x} \le \pi_0 \), also written as \( (\boldsymbol{\pi}, \pi_0) \in \mathbb{R}_{\ge 0}^{n+1} \), is called a \emph{valid inequality} for \( \poly{A}{\boldsymbol{b}} \) if it holds for any point in \( \poly{A}{\boldsymbol{b}} \).
\end{definition}

Discovering valid inequalities separating a non-integer vertex \( \boldsymbol{x}^* \) of the continuous relaxation of \( \poly{A}{\boldsymbol{b}} \)  from \( \poly{A}{\boldsymbol{b}} \) itself is a theoretically rich topic with major practical implications for integer programming solvers. For the purposes of this work, we only need a few core notions; an interested reader is encouraged to survey the literature on knapsack polyhedra~\citep{Hojny2020-annOperRes} for a more systematic exposition. Consider first a simple family of valid inequalities:

\begin{definition}
    Given a polyhedron \( \poly{\boldsymbol{a}^T}{b} \) with a single inequality constraint \( \boldsymbol{a}^T \boldsymbol{x} \le b \), we say that \( C \subseteq \{ 1, \dotsc, n \} \) is its \emph{cover} if \( \boldsymbol{a}(C) := \sum_{i \in C}a_i > b \).
\end{definition}

\begin{proposition}
    A \emph{cover inequality} \( \boldsymbol{x}(C) \le \# C - 1\) is a valid inequality for \( \poly{\boldsymbol{a}^T}{b} \) if \( C \) is its cover.
\end{proposition}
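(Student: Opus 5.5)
The plan is a direct contrapositive argument. Fix a cover \( C \) of \( \poly{\boldsymbol{a}^T}{b} \) and an arbitrary point \( \boldsymbol{x} \in \poly{\boldsymbol{a}^T}{b} \), so \( \boldsymbol{x} \in \{0,1\}^n \) and \( \boldsymbol{a}^T\boldsymbol{x} \le b \). Since \( \boldsymbol{x} \) is binary, \( \boldsymbol{x}(C) = \sum_{i\in C} x_i \) is an integer between \( 0 \) and \( \#C \). Therefore the inequality \( \boldsymbol{x}(C) \le \#C - 1 \) can only fail when \( \boldsymbol{x}(C) = \#C \), that is, when \( x_i = 1 \) for every \( i \in C \). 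I will assume this and derive a contradiction.

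Under that assumption, I will lower-bound \( \boldsymbol{a}^T\boldsymbol{x} \) by splitting the sum into indices in \( C \) and indices outside \( C \):
\[
\boldsymbol{a}^T\boldsymbol{x} = \sum_{i\in C} a_i x_i + \sum_{i\notin C} a_i x_i \ge \sum_{i\in C} a_i = \boldsymbol{a}(C) > b .
\]
The inequality step uses two facts. First, \( x_i = 1 \) for every \( i \in C \). Second, the standing assumption of the paper says \( a_i \ge 0 \), and \( x_i \ge 0 \), so every term \( a_i x_i \) with \( i \notin C \) is nonnegative. The final strict inequality is exactly the definition of a cover. The resulting bound \( \boldsymbol{a}^T\boldsymbol{x} > b \) contradicts \( \boldsymbol{x} \in \poly{\boldsymbol{a}^T}{b} \), so \( \boldsymbol{x}(C) \le \#C - 1 \) holds at every point of the set. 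This is precisely validity.

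There is no real obstacle. The only point needing care is that the argument relies on the nonnegativity of \( \boldsymbol{a} \). With negative coefficients outside \( C \), a point could have all of \( C \) switched on and still satisfy the knapsack constraint, and the cover inequality would fail. I will therefore state explicitly where the assumption \( A \in \mathbb{Z}^{m\times n}_{\ge 0} \) from the preliminaries is used. The coefficient vector \( (\mathbf{1}_C, \#C-1) \) is nonnegative whenever \( C \) is nonempty, and \( C \) is necessarily nonempty because \( \boldsymbol{a}(\emptyset) = 0 \le b \). So the inequality also satisfies the paper's convention that valid inequalities lie in \( \mathbb{R}^{n+1}_{\ge 0} \).
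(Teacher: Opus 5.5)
Your proof is correct: if every $x_i$ with $i \in C$ equals $1$, nonnegativity of $\boldsymbol{a}$ outside $C$ gives $\boldsymbol{a}^T\boldsymbol{x} \ge \boldsymbol{a}(C) > b$, which contradicts membership in \( \poly{\boldsymbol{a}^T}{b} \). The paper gives no proof and treats the proposition as a standard fact from the knapsack-polyhedra literature; your argument is that standard one, and you correctly identify the paper's nonnegativity assumption on the coefficients as the ingredient that makes it work.
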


Next, we need a general-purpose procedure for increasing the left-hand side of a valid inequality \( \boldsymbol{\pi}^T \boldsymbol{x} \le \pi_0 \)—which may or may not be a cover inequality—known as \emph{lifting}~\citep{Padberg1975-operRes}. This procedure maintains the set \( C \) of variables that have already been considered for adding to the inequality, starting with \( C \gets \{ i : \pi_i \neq 0 \} \), and repeating the following steps until \( C = \{ 1, \dotsc, n \} \):
\begin{enumerate}
    \item Choose an unused variable \( i \in \{ 1, \dotsc, n \} \setminus C \) or terminate if none exist.
    \item Solve an auxiliary subproblem over the variables in \( C \) \begin{align}
        \sum_{c \in C} \pi_c x_c &\to \max \nonumber \\
        \sum_{c \in C} a_{j,c} x_c &\le b_j - a_{j,i} \forall j \label{eq:lifting-subproblem} \\
        x_c &\in \{ 0, 1 \} \, \forall c \in C \nonumber
    \end{align}
    that corresponds to maximizing the left-hand side of the current inequality \( \boldsymbol{\pi}^T \boldsymbol{x} \) when the variable \( x_i \) under consideration is set to 1.
    \item Set \( C \gets C \cup \{ i \} \) and \( \pi_i \gets \pi_0 - v^*(\boldsymbol{\pi}, \pi_0, C; A, \boldsymbol{b}) \), where \( v^*(\boldsymbol{\pi}, \pi_0, C, A, \boldsymbol{b}) \) is the maximum value of the objective of the problem given by \Cref{eq:lifting-subproblem}.
\end{enumerate}

To illustrate the concepts of cover inequality and lifting, we apply all of them in the following example, which is also used later in the paper to illustrate the proposed approach:

\begin{example}
    \label{example:lifting-prelims}
    Consider the following knapsack problem:
    \begin{align}
        x_1 + x_2 + x_3 + x_4 &\to \max \nonumber \\
        5x_1 + 3x_2 + 2x_3 + 4x_4 &\le 7 \label{example:lifting-prelims:knapsack-inequality} \\
        \boldsymbol{x} &\in \{ 0, 1 \}^4.  \nonumber
    \end{align}

    The upper bound on the objective derived by relaxing the integrality constraints\footnote{Also known as the linear programming relaxation.} is equal to \( 2\frac{1}{2} \) and is achieved on the solution \( \boldsymbol{x}^{(1)} = \left(0, 1, 1, \frac{1}{2}\right)\). The support set \( C = \{ 2, 3, 4 \} \) of this solution is the \emph{cover} for \Cref{example:lifting-prelims:knapsack-inequality} (\( 3 + 2 + 4 = 9 > 7\)), and it corresponds to the \emph{cover inequality} \( x_2 + x_3 + x_4 \le 2 \) forbidding the simultaneous choice of those three variables.

    Since this is a valid inequality, it is possible to add it to the optimization model alongside \Cref{example:lifting-prelims:knapsack-inequality} and solve the resulting linear programming relaxation; this yields an upper bound of \( 2\frac{2}{5} \) and is achieved on the solution \( \boldsymbol{x}^{(2)} = \left( \frac{2}{5}, 1, 1, 0 \right) \). We can improve this bound by applying the \emph{lifting} procedure first; in this example, we lift the only remaining variable \( x_1 \)  with the following reasoning.
    
    Consider a constraint \( \alpha x_1 + x_2 + x_3 + x_4 \le 2 \) for arbitrary \( \alpha \ge 0 \) and observe that it holds for any solution satisfying \Cref{example:lifting-prelims:knapsack-inequality} with \( x_1 = 0 \) regardless of \( \alpha \). Suppose that \( x_1 = 1 \), and rewrite the lifted constraint and \Cref{example:lifting-prelims:knapsack-inequality} as follows: \begin{equation}
        \underbrace{3x_2 + 2x_3 + 4x_4 \le 2}_{\text{\Cref{example:lifting-prelims:knapsack-inequality}}}  \implies \underbrace{\alpha \le 2 - ( x_2 + x_3 + x_4)}_{\text{lifted constraint for }x_1 = 1}. \label{example:lifting-prelims:lifting-condition}
    \end{equation}
    Since we look for the tightest valid constraint, we can achieve this by choosing the largest \( \alpha \) such that \Cref{example:lifting-prelims:lifting-condition} is still true for any \( x_2, x_3, x_4 \in \{0, 1 \}\). Specifically, we achieve this by setting \begin{equation}
        \label{example:lifting-prelims:lifting-subproblem}
        \alpha \gets 2 - \max \{ x_2 + x_3 + x_4 \vert 3x_2 + 2x_3 + 4x_4 \le 2, x_2, x_3, x_4 \in \{ 0, 1 \} \},
    \end{equation}  which also happens to be the problem written in \Cref{eq:lifting-subproblem} for the values of \( (A, \boldsymbol{b}, C, \boldsymbol{\pi}, \pi_0) \) used in this example. To solve the \emph{lifting subproblem} in \Cref{example:lifting-prelims:lifting-subproblem}, observe that \( x_3 = 1, x_2 = x_4 = 0 \) is a feasible solution, but setting any two variables to one is not possible under the \( 3x_2 + 2x_3 + 4x_4 \le 2 \) constraint. Therefore, setting \( \alpha \gets 2 - 1 = 1\) retains the validity of the inequality, and we obtain the lifted inequality \( \boxed{x_1 + x_2 + x_3 + x_4 \le 2.} \) To conclude the example, observe that adding it alongside \Cref{example:lifting-prelims:knapsack-inequality} and solving the linear programming relaxation yields an upper bound of \( 2 \), achieved on, among others,  \( \boldsymbol{x}^* = (0, 1, 0, 1) \), which is enough to conclude optimality.
\end{example}

\section{\texorpdfstring{From \textcumulative{} to linear inequalities}{From Cumulative to linear inequalities}}
\label{section:occupancy-ineq}

We start by introducing a representation of the start-time variables that we need in order to reason about linear inequalities:

\begin{definition}
    \label{def:occupancy-vector}
    Given a variable \( x \) of a CSP and an integer \( d \), further referred to as \emph{time} and \emph{duration}, we associate the variable assignments with \emph{occupancy vectors} \( \occ{x}{d} \in \{ 0, 1 \} ^\mathbb{Z} \) as follows: \[ \occ{x}{d}(t) := \begin{cases}
        1 & \text{if } x \le t < x + d \\
        0 & \text{otherwise}. \\
    \end{cases} \]
\end{definition}

The key point behind this definition is that a \textcumulative{} constraint can be rewritten as a \emph{linear inequality}\footnote{For simplicity, an inequality between a scalar and a vector \( \boldsymbol{x} \le b \) is understood as the coordinate-wise inequality \( \boldsymbol{x} \le (b, \dotsc, b)\).} over the occupancy vectors: \begin{equation}
    \cumulative{\boldsymbol{x}}{\boldsymbol{r}}{\boldsymbol{d}}{C} \Longleftrightarrow \sum_{i \in T} r_i \occ{x_i}{d_i} \le C. \label{eq:cumulative-to-linear}
\end{equation} 

This reformulation is similar to the time-resource decomposition~\citep{Schutt2009-Other} in the sense that it linearizes a \textcumulative{} constraint, but the main difference is that \Cref{eq:cumulative-to-linear} introduces \emph{one vector} inequality, which is possible through the use of occupancy vectors. Conversely, the time-difference decomposition introduces one constraint per time point, as well as a collection of constraints that enforce \Cref{def:occupancy-vector}. While using \Cref{eq:cumulative-to-linear} directly does not provide any advantage for solving CSPs with \textcumulative{} constraints, it provides a new viewpoint for such models: they can be seen as models with \emph{polyhedral} constraints over occupancy vectors, with the coefficients of inequalities mapping one-to-one to the arguments of \textcumulative{} constraints.

Given that, it is natural to assume that if a part of the CP model can be restated in a polyhedral form, then the cutting-plane approaches could be used to derive new linear inequalities—or, in CP terms, new \textcumulative{} constraints—with potentially better inference through aggregation. The next result formalizes this insight and sets the stage for a more practical discussion of inferring \textcumulative{} constraints:

\begin{lemma}
    \label{lemma:cumulative-cuts}
    Consider a CSP \( (\mathcal{X}, \mathcal{D}, \mathcal{C}) \) with \( \mathcal{X} = (x_1, \dotsc, x_n) \) such that all satisfying assignments also satisfy the constraints \( \sum_{j=1}^n a_{ij} \occ{x_j}{d_j} \le b_i, 1 \le i \le m \) for some \( A \in \mathbb{Z}_{\ge 0}^{m \times n} \) and \( \boldsymbol{b} \in \mathbb{Z}_{\ge 0}^m \). Let \( (\boldsymbol{\pi}, \pi_0) \) be a valid inequality for the polyhedron \( \mathcal{P}(A, \boldsymbol{b}) \). Then any assignment satisfying the CSP also satisfies \( \cumulative{\mathcal{X}}{\boldsymbol{\pi}}{\boldsymbol{d}}{\pi_0} \).
\end{lemma}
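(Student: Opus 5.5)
The plan is to argue pointwise in time: we fix a satisfying assignment and a time point \( \tau \), and reduce the claim to the validity of \( (\boldsymbol{\pi}, \pi_0) \) applied to a single 0/1 vector. So fix an assignment \( \mathcal{I} \) satisfying the CSP, write \( x_j \) for the value it gives the \( j \)-th variable, and fix an arbitrary \( \tau \in \mathbb{Z} \). Define \( \boldsymbol{y} \in \{0,1\}^n \) by \( y_j := \occ{x_j}{d_j}(\tau) \), so that \( y_j = 1 \) exactly when \( x_j \le \tau < x_j + d_j \).

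First, \( \boldsymbol{y} \in \mathcal{P}(A, \boldsymbol{b}) \). By hypothesis \( \mathcal{I} \) satisfies each vector inequality \( \sum_j a_{ij} \occ{x_j}{d_j} \le b_i \). By the coordinate-wise convention for such inequalities, evaluating coordinate \( \tau \) gives \( \sum_j a_{ij} y_j \le b_i \) for every \( 1 \le i \le m \). Together with \( \boldsymbol{y} \in \{0,1\}^n \), this is exactly membership in \( \mathcal{P}(A,\boldsymbol{b}) \).

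Second, since \( (\boldsymbol{\pi}, \pi_0) \) is valid for \( \mathcal{P}(A, \boldsymbol{b}) \), we get \( \sum_j \pi_j y_j \le \pi_0 \). Unfolding the definition of \( y_j \), this says
\[ \sum_{j \,:\, x_j \le \tau < x_j + d_j} \pi_j \le \pi_0. \]
This is precisely the condition in \Cref{eq:cumulative} at time \( \tau \), with usages \( \boldsymbol{\pi} \) and capacity \( \pi_0 \). Since \( \tau \) was arbitrary, \( \mathcal{I} \) satisfies \( \cumulative{\mathcal{X}}{\boldsymbol{\pi}}{\boldsymbol{d}}{\pi_0} \). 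Equivalently, this verifies the right-hand side of \Cref{eq:cumulative-to-linear} coordinate by coordinate.

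There is no real obstacle here. The only point needing care is bookkeeping. \Cref{def:cumulative} asks for nonnegative integer usages and capacity, whereas a valid inequality is only required to lie in \( \mathbb{R}_{\ge 0}^{n+1} \). I would note that the argument above never uses integrality, so it shows the pointwise inequality for real coefficients. It yields a genuine \textcumulative{} constraint whenever \( \boldsymbol{\pi}, \pi_0 \) are integral, which covers the cover and lifted inequalities used later. Otherwise one reads \Cref{eq:cumulative} with real coefficients.
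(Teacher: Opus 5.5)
Your proof is correct and matches the paper's: both evaluate the occupancy vectors at a single time point to get a 0/1 vector in \( \mathcal{P}(A, \boldsymbol{b}) \), then invoke the validity of \( (\boldsymbol{\pi}, \pi_0) \). The paper phrases this as a contradiction at a violating time point, whereas you argue directly for every \( \tau \); your remark on the integrality of \( (\boldsymbol{\pi}, \pi_0) \) is a fair bookkeeping point that the paper leaves implicit.
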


\begin{note}
    We assume in this lemma that all linear inequalities on occupancy vectors use the same duration variables for any given time variable. This is a natural assumption, since \( d_i \) commonly refers to the duration of \( i \)-th task, which does not depend on the considered resource.
\end{note}

\begin{proof}
    Suppose that this is not the case; then, by \Cref{eq:cumulative-to-linear}, there is a satisfying assignment \( \theta \) of the CSP such that \( \sum_{j=1}^n a_{ij} \occ{x_j}{d_j} \le b_i, 1 \le i \le m \) are true but \( \sum_{j=1}^n \pi_j \occ{x_j}{d_j} \le \pi_0 \) is not. Let \( t \) be the time point where \( \sum_{j=1}^n \pi_j \occ{x_j}{d_j} \le \pi_0 \) does not hold, that is, \( \sum_{j=1}^n \pi_j y_j > \pi_0 \) for \( y_j = \occ{x_j}{d_j}(t) \).

    We thus have constructed a vector \( \boldsymbol{y} \in \{ 0, 1 \}^n \) such that \( A\boldsymbol{y} \le \boldsymbol{b} \) and \( \boldsymbol{\pi}^T \boldsymbol{y} > \pi_0 \). However, that contradicts the definition of the valid inequality, since \( \boldsymbol{y} \) is a point in the polyhedron \( \poly{A}{\boldsymbol{b}} \) for which the valid inequality \( \boldsymbol{\pi}^T \boldsymbol{y} \le \pi_0 \) does not hold.
\end{proof}

To see the utility of \Cref{lemma:cumulative-cuts}, consider the following example:

\begin{example}
    Consider a CSP on variables \( \boldsymbol{x} = (x_1, x_2, x_3, x_4) \) with a constraint \[ \cumulative{\boldsymbol{x}}{(5, 3, 2, 4)}{\boldsymbol{d}}{7} \Longleftrightarrow 5\occ{x_1}{d_1} + 3\occ{x_2}{d_2} + 2\occ{x_3}{d_3} + 4 \occ{x_4}{d_4} \le 7 \] for some vector of durations \( \boldsymbol{d} \). We know from \Cref{example:lifting-prelims} that \( (\boldsymbol{\pi}; \pi_0) = (1, 1, 1, 1; 2) \) is a valid inequality for the \( \poly{A}{\boldsymbol{b}} \) polyhedron formed from the \textcumulative{} constraint. But, by \Cref{lemma:cumulative-cuts}, that means that we can introduce the following constraint to the CSP: \[ \cumulative{\boldsymbol{x}}{(1, 1, 1, 1)}{\boldsymbol{d}}{2} \Longleftrightarrow \occ{x_1}{d_1} + \occ{x_2}{d_2} + \occ{x_3}{d_3} + \occ{x_4}{d_4} \le 2. \]
    In other words, we have discovered that \emph{at most two tasks can run in parallel} in any feasible assignment.\end{example}

More generally, \Cref{lemma:cumulative-cuts} suggests that inferring \textcumulative{} constraints can be done by writing out the model constraints, where possible, in the occupancy-vector form, discovering a valid inequality for those constraints, and substituting its coefficients as usages and capacity for a new “resource.” The discovery step, however, has a subtle difference from the integer programming case: the goal there is usually not simply to produce new inequalities, but to \emph{separate} a non-integer solution \( \boldsymbol{x}^* \) from the linear programming relaxation by a valid inequality~\citep{Kaparis2010-mathProgram}; this, in turn, suggests starting with a cover violated by \( \boldsymbol{x}^* \). In the absence of such guidance, some other strategy for enumerating covers is required. The next section describes these steps—cover generation, lifting, and constraint introduction—in more detail.

\section{\texorpdfstring{Inferring \textcumulative{} constraints by lifting}{Inferring Cumulative constraints by lifting}}
\label{section:approach}

This section introduces an approach for inferring auxiliary \textcumulative{} constraints from \( m \) constraints \( \sum_{j=1}^n a_{ij} \occ{x_j}{d_{j}} \le b_i \) for \( 1 \le i \le m \). To employ the lifting in the absence of a guiding “non-solution,” I propose instead to:

\begin{enumerate}
    \item Find a collection \( \mathcal{C} \) of at most \( N_\mathrm{cover} \) covers for the constraints \( A\boldsymbol{x} \le \boldsymbol{b}, \boldsymbol{x} \in \{0, 1 \}^n \).
    \item For each cover \( C \in \mathcal{C} \), lift the cover inequality \( \boldsymbol{x}(C) \le \# C - 1 \) with respect to the polyhedron \( \poly{A}{\boldsymbol{b}} \) as described in \Cref{section:preliminaries}.
    \item Among the lifted \textcumulative{} constraints, add a limited number \( N_\mathrm{out} \) of them, giving preference to the most restrictive constraints.
\end{enumerate}

Since this procedure is bound to produce many valid inequalities, the next definition introduces a quality metric of a \textcumulative{} constraint. I use it directly in Step 3 by giving preferences to the constraints with the larger score, as well as to choose which of the (many) covers in \( \mathcal{C} \) will eventually be lifted.

\begin{definition}
    Given a constraint \( \cumulative{\boldsymbol{x}}{\boldsymbol{r}}{\boldsymbol{d}}{C}\), we call its \emph{capacity bound} the ratio of total resource usage over the time horizon to its capacity per time unit: \( \capacity{\boldsymbol{r}}{\boldsymbol{d}}{C} := \frac{\sum_i d_i r_i}{C}. \)
\end{definition}

The relevance of this quantity, as pointed out in early RCPSP work~\citep[p.~255]{Stinson1978-aiieTrans}~\citep[Section~2]{Klein1999-eurJOperRes}, is that it gives a lower bound on the duration \emph{spanned} by the tasks as formalized by the following statement, the proof of which is available in the aforementioned RCPSP work~\citep{Stinson1978-aiieTrans}.

\begin{lemma}
    For any satisfying assignment \( \theta \) of a constraint \( \cumulative{\boldsymbol{x}}{\boldsymbol{r}}{\boldsymbol{d}}{C}\), let \( s := \min_i x_i \) and \( f := \max_i (x_i + d_i )\).\footnote{In the scheduling interpretation, \( s \) is the earliest starting time among the constrained tasks, \( f \) is the latest finishing time among these tasks, and their difference \( (f - s) \) is the \textit{makespan} of the plan.} Then \( f - s \ge \capacity{\boldsymbol{r}}{\boldsymbol{d}}{C}\).
\end{lemma}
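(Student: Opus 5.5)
We prove the bound by double counting the total resource consumption, using the occupancy-vector form of the constraint from \Cref{eq:cumulative-to-linear}. Fix a satisfying assignment \( \theta \), and let \( s \) and \( f \) be as in the statement. Consider the function \( u(t) := \sum_i r_i \occ{x_i}{d_i}(t) \), which is the resource usage at time \( t \). By \Cref{eq:cumulative-to-linear}, \( u(t) \le C \) for every \( t \in \mathbb{Z} \).

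The first step is to sum \( u(t) \) over all of \( \mathbb{Z} \) and exchange the order of summation. Each occupancy vector \( \occ{x_i}{d_i} \) has exactly \( d_i \) ones, namely at \( t = x_i, \dotsc, x_i + d_i - 1 \). Hence
\[ \sum_{t \in \mathbb{Z}} u(t) = \sum_i r_i \sum_{t \in \mathbb{Z}} \occ{x_i}{d_i}(t) = \sum_i r_i d_i. \]
The sums are finite because there are finitely many tasks.

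The second step is to localize the support. For every \( i \) we have \( s \le x_i \) and \( x_i + d_i \le f \), so \( \occ{x_i}{d_i}(t) = 0 \) for \( t \notin [s, f) \). Therefore \( u \) vanishes outside the \( f - s \) integer points \( s, \dotsc, f-1 \), and
\[ \sum_i r_i d_i = \sum_{t=s}^{f-1} u(t) \le (f - s)\, C. \]

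Dividing by \( C \) gives \( f - s \ge \sum_i d_i r_i / C = \capacity{\boldsymbol{r}}{\boldsymbol{d}}{C} \).

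The main obstacle is the degenerate case \( C = 0 \), where \( \capacity{\boldsymbol{r}}{\boldsymbol{d}}{C} \) is undefined. We handle it by assuming \( C > 0 \) whenever the capacity bound is used, or by reading the bound as \( +\infty \), which is consistent only when \( \sum_i r_i d_i = 0 \).

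A related edge case arises when tasks have \( d_i = 0 \). Such tasks still enter the definitions of \( s \) and \( f \), but their occupancy vectors are identically zero, and the support argument above remains valid for them as written. The division step is the only place where positivity of \( C \) is needed.
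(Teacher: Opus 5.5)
Your double-counting argument is correct and is the standard energy argument behind the cited result; the paper gives no proof of its own and defers to Stinson et al. In that argument, the total work $\sum_i r_i d_i$ must fit into the $f-s$ time points of $[s,f)$, with at most $C$ units used at each point. One small correction to your aside: when $C=0$, a satisfying assignment forces $\sum_i r_i d_i = 0$, so the ratio is $0/0$, and reading it as $+\infty$ would make the inequality false rather than consistent; simply assuming $C>0$ is the right fix.
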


Step~1 enumerates seed covers independently for each of the \( m \) \textcumulative{} constraints. First, this procedure collects the ``short'' covers of cardinality two and three and keeps the best \( N_{\mathrm{cover}} \) covers according to the capacity bound. Given one such constraint \( \sum_i a_i \occ{x_i}{d_i} \le b \), the ``short'' covers are constructed as follows:
\begin{itemize}
    \item Collect all pairs \( \{ i, j \} \) such that \( a_i + a_j > b\), which is done with time complexity \( O(n^2) \).
    \item For any \emph{non-covering} pair \( \{ i, j \} \), consider the set \( K(i, j) = \{ k : a_k > b - (a_i + a_j) \} \). If it is empty, do nothing; otherwise, add a cover \( \{ i, j, k(i, j) \} \) with \( k(i, j) = \argmax_{k \in K(i,j)} d_k \), that is, complete the cover by the longest eligible task. This step is done with time complexity \( O(bn^2) \).
\end{itemize} Then, this set is extended by adding ``long'' covers (with no a priori limit on the resulting cardinality). Given a constraint \( \sum_i a_i \occ{x_i}{d_i} \le b \), long covers are generated by grouping tasks by their resource consumption values \( v \). For each group \( B_v := \{ i : a_i = v \} \), pick the smallest \( k \) such that \( kv > b \), and add the two uniform seed covers consisting of (i) the \( k \) longest tasks in \( B_v \) and (ii) the \( k \) shortest tasks in \( B_v \); this is done in \( O(bn\log n) \).

The complete procedure can be seen as a consecutive application of \Cref{alg:cover-enumeration} on the original CSP and \Cref{alg:cumulative-generation} on the CSP and the covers constructed in the earlier phase; in both algorithms, \( \mathtt{TopK}(\mathcal{C}, k) \) is a procedure that sorts \( \mathcal{C} \) by descending value of \( \capacity{\cdot}{\boldsymbol{d}}{\cdot} \) and returns the first \( k \) elements. Aside from the steps discussed in this section, the implementation employs two additional optimizations: First, it discards any valid inequalities \( (\boldsymbol{\pi}, \pi_0) \) that are dominated by the model constraints in the sense that \( \boldsymbol{\pi} \le \boldsymbol{a}_i\) and \( \pi_0 \ge b\). Second, after lifting a cover \( C \) to a constraint of the form \( \sum_{k \in C'} x_k + \cdots \le \# C-1 \), all subsets of \( C' \) with cardinality \( \#C \) are marked as covers \emph{which are not lifted later}; to see why this is helpful, consider the following example.

\begin{example}
    Suppose that \Cref{alg:cumulative-generation} has lifted a cover \( C_0 = \{ 1, 2 \} \) to a constraint \begin{equation}
        \underbrace{\occ{x_1}{d_1} + \occ{x_2}{d_2}}_{\text{cover inequality}} + \underbrace{\occ{x_3}{d_3} + \cdots + \occ{x_{100}}{d_{100}}}_{\text{lifted terms}} \le 1. \label{equation:lifting-short-circuit}
    \end{equation}
    In other words, lifting has discovered that no two tasks among the set \( C' = \{ 1, 2, \dotsc, 100 \} \) can run in parallel, which translates to an update \( Q \gets Q \cup (C', 2) \).

    Now, consider later iterations of the loop over covers \( C \) that visit other binary covers within \( C' \); since any two tasks form a cover, this happens for \( \binom{100}{2} - 1 = 4949 \) iterations unless some task pairs were discarded before the lifting. Observe that the “skipping” condition holds for \emph{any two tasks} in \( C' \) since \( (C', 2) \in Q, C' \supseteq C \), and \( \# C \ge 2 \); that means that none of those iterations invokes the \( v^* \) function, and skipping \( \approx 5000 \) redundant calls to \( v^* \) is a performance gain, because (a) solving lifting subproblems is the bottleneck of \Cref{alg:cumulative-generation}, and (b) lifting on \( C' \) is likely to re-discover \Cref{equation:lifting-short-circuit} instead of a new constraint; for example, it is guaranteed to do so if it lifts the variables in \( C' \) before others.
\end{example}

\begin{algorithm}[ht]
\caption{Enumeration of covers.}\label{alg:cover-enumeration}
\KwData{A CSP \( (\mathcal{X}, \mathcal{D}, \mathcal{C}) \).}
\KwData{A polyhedron \( \poly{A}{\boldsymbol{b}} \) and durations \( \boldsymbol{d} \) satisfying the assumptions of \Cref{lemma:cumulative-cuts}.}
\KwData{The maximum number of considered short covers \( N_{\mathrm{cover}}. \)}
\KwResult{A set of valid covers.}
\( \mathcal{C} \gets \emptyset \)\tcc*{The set of all discovered covers}
\tcp{Generate ``short'' covers of cardinalities two and three}
\For{\( r \gets 1 \) \KwTo \( m \)}{
    \For{\( i \gets 1 \) \KwTo \( n \), \( j \gets i+1 \) \KwTo \( n \)}{
        \eIf{\( a_{r,i} + a_{r,j} > b_r\)}{
            \( \mathcal{C} \gets \mathcal{C} \cup \{ \{ i, j \} \} \) \tcc*{add all binary covers}
        }{
            \If{\( K(i, j) \neq \emptyset \)}{
                \( \mathcal{C} \gets \mathcal{C} \cup \{ \{ i, j, k(i, j) \} \} \) \tcc*{complete the ternary cover}
            }
        }
    }
}
\tcp{Retain the \( N_\mathrm{cover} \) covers with the highest capacity bound}
\( \mathcal{C} \gets \mathtt{TopN}(\mathcal{C}, N_\mathrm{cover}) \)\;

\tcp{Generate ``long'' covers of unbounded cardinality}
\For{\( r \gets 1 \) \KwTo \( m \)}{
    \tcp{Group tasks by consumption value \( v \)}
    \ForEach{\( v \) with \(  \{ i : a_{r,i} = v \} \neq \emptyset \)}{
        \tcp{Pick the smallest \( k \) such that \( k \cdot v > b_r \)}
        \( k \gets \min\{ t \in \mathbb{Z} : t \cdot v > b_r \} \)\;
        \( C^{\max} \gets \) the \( k \) indices in \( B_v \) with the largest durations \( d_i \)\;
        \( C^{\min} \gets \) the \( k \) indices in \( B_v \) with the smallest durations \( d_i \)\;
        \tcp{Add the longest and shortest \( k \) tasks as covers}
        \( \mathcal{C} \gets \mathcal{C} \cup \{ C^{\max}, C^{\min} \} \)\;
    }
}
\Return \( \mathcal{C} \)\;
\end{algorithm}

\begin{algorithm}[ht]
\caption{Lifting-based procedure for inferring auxiliary \textcumulative{} constraints.}\label{alg:cumulative-generation}
\KwData{A CSP \( (\mathcal{X}, \mathcal{D}, \mathcal{C}) \).}
\KwData{A polyhedron \( \poly{A}{\boldsymbol{b}} \) and durations \( \boldsymbol{d} \) satisfying the assumptions of \Cref{lemma:cumulative-cuts}.}
\KwData{A set of covers \( \mathcal{C} \).}
\KwData{The maximum number of added constraints \( N_{\mathrm{out}}. \)}
\KwResult{A set of at most \( N_\mathrm{out}\) valid \textcumulative{} constraints.}

\( Q \gets \emptyset \)\tcc*{\( (C, k) \in Q \implies \) any subset of \( C \) of size \( k \) is a cover}
\( \mathcal{O} \gets \emptyset \)\tcc*{The set of all discovered \textcumulative{} constraints}
\For{\( C \in \mathcal{C} \)}{
    \tcc{Skip this cover if already mentioned in an earlier constraint}
    \If{\( \exists (C', k) : C' \supseteq C \land k \le \# C \)}{
        \Continue\;
    }
    \( \boldsymbol{\pi} \gets \boldsymbol{0}, \boldsymbol{\pi}(C) \gets 1, \pi_0 \gets \#C - 1 \)\tcc*{Invariant: \( (\boldsymbol{\pi}, \pi_0)\) is valid}
    \( I \gets \{ 1, \dotsc, n \} \setminus C \)\tcc*{Invariant: \( i \in I \implies \pi_i = 0 \)}
    \While{\( I \neq \emptyset \)}{
        \( i \gets \argmin_{i \in C} d_i \)\tcc*{Choose the shortest remaining task}
        \( \pi_i \gets \pi_0 - v^*(\boldsymbol{\pi}, \pi_0, I, A, \boldsymbol{b}) \)\tcc*{\( v^* \) is obtained from \Cref{eq:lifting-subproblem}}
        \( I \gets I \setminus \{ i \} \)\tcc*{Mark this variable as lifted}
    }
    \tcc{Mark all elements with unit coefficients as visited}
    \( Q \gets Q \cup (\{ i : \pi_i = 1 \}, \#C ) \)\;
    \tcc{Add the discovered \textcumulative{} unless it is dominated}
    \If{\( \exists r : \boldsymbol{a}_r \ge \boldsymbol{\pi} \land b_r  \le \pi_0 \)}{
        \Continue\;
    }
    \( \mathcal{O} \gets \mathcal{O} \cup \big\{ \cumulative{\mathcal{X}}{\boldsymbol{\pi}}{\boldsymbol{d}}{\pi_0}  \big\} \)\;
}
\tcp{Retain the \( N_\mathrm{out} \) best constraints with the highest capacity bound}
\( \mathcal{O} \gets \mathtt{TopN}(\mathcal{O}, N_\mathrm{out}) \)\;
\Return \( \mathcal{O} \)\;
\end{algorithm}

\section{Experimental evaluation}
\label{section:experiments}

In this section, I evaluate the performance of the presented approach on a collection of research benchmarks. I also compare the resulting bounds with those reported in the literature and show that the approach discovers new bounds for several benchmark instances (\nlowerbounds{} lower and \nupperbounds{} upper bounds), with \nsearchless{} lower bounds \emph{directly} corresponding to the capacity bounds of newly introduced constraints.

The implementation of the approach, together with the infrastructure for running the experiments and processing their results, is available in the supplementary material; the modeling part is based on the CPMpy library~\citep{Guns2019-Other}. All lifting subproblems (\Cref{eq:lifting-subproblem}) are solved with Gurobi 12~\citep{Other2024-Other-02}. I ran the experiments on \delftblue{}, with each run of an instance being allocated a single core of an Intel Xeon E5-6248R 24C 3.0GHz processor and 4000 MB of RAM with a time limit of one hour.

I use the benchmark collection previously used by \citet{Sidorov2025-Other-01} to evaluate the \uniteandlead{} approach for the same two problem models (RCPSP and RCPSP/max) with \textcumulative{} constraints and difference constraints of the form \( x_j - x_i \ge \gamma_{ij} \) for RCPSP/max and \( x_j - x_i \ge d_i \) for RCPSP, both having the latest completion time of all tasks (makespan) as the minimization objective. More specifically, I use 736 RCPSP instances from MiniZinc benchmarks~\citep{Stuckey2014-aiMag,Olaguibel1989-advancesInProjectScheduling,Baptiste2000-constraints,Carlier2003-eurJOperRes,Kone2011-computOperRes,Kolisch1997-eurJOperRes} and 349 RCPSP/max benchmarks distributed by PSPLIB~\citep{Kolisch1997-eurJOperRes} through C, D, UBO, and SM suites. All data files are available in the supplementary materials.

Given a minimization objective $\mathcal{O}$, each of the solvers is run with one of the two search directions. In the \textbf{primal search},\footnote{Also known as the linear SAT-UNSAT search.}  the solver generates a series of problems with an extra assumption \( \llbracket \mathcal{O} \le o_n \rrbracket\) for decreasing values of \( o_n \). In the \textbf{dual search},\footnote{Also known as the linear UNSAT-SAT search or destructive lower bound.} the solver generates a series of problems with an extra assumption \( \llbracket \mathcal{O} \le o_n \rrbracket\) for increasing values of \( o_n \). 

Each of the runs involves one of the following CP solvers: Pumpkin~\citep{Flippo2024-Other} and CP-SAT~\citep{Perron2024-Other}. The choice of the solvers is dictated by two factors: (a) both of the mentioned codes are advanced, state-of-the-art solvers performing well at the recent editions of MiniZinc Challenge, and (b) both of them support both the primal and the dual search directions. In the baseline runs, the solver is run on the input CSP as is, whereas the runs with the lifting first execute \Cref{alg:cover-enumeration} with \( N_\mathrm{cover} = 100 \), then \Cref{alg:cumulative-generation} with \( N_\mathrm{out} = 5 \), and proceed with the solver execution on the augmented CSP in the remaining time. I budget the preprocessing by limiting the number of candidate covers considered (\( N_\mathrm{cover} \)) and the number of added inferred constraints (\( N_\mathrm{out} \)), rather than by imposing a wall-time cutoff, since wall-time on shared HPC systems is not deterministic. All solvers have been invoked with the free search.

To evaluate the solver performance when it did not conclude optimality, I measure the rate of progress of a solver towards the best-known bound with the following metrics. First, if \( M(t) \) is the lowest makespan discovered at time \( t \in [0, T] \), and \( M^* \) is the lowest discovered makespan for this problem, then the \textbf{primal integral} is \( \int_0^T \frac{M(t) - M^*}{M(t)} \) and measures how fast the solver progresses towards good solutions~\citep{Berthold2013-operResLett}. Conversely, if \( B(t) \) is the highest lower bound discovered at time \( t \in [0, T] \), and \( B^* \) is the highest discovered lower bound for this problem, the \textbf{dual integral} is defined as \( \int_0^T \frac{B^* - B(t)}{B^*} \).

All source code necessary to reproduce the experiments, including the implementation of the lifting approach, as well as output data files, can be accessed in Zenodo \citep{Sidorov2026-Other-01, Sidorov2026-Other}.

\subsection{Evaluation against direct CP solver application}

First, I report the evaluation of the lifting approach for the instances where both the run performing lifting and its counterpart that does not lift (while having the same settings otherwise) have proven optimality. As shown on \Cref{fig:time-to-opt}, lifting is consistently able to cut down the solving time by large factors, sometimes from taking dozens of minutes to sub-second times. On the other hand, most of the instances with large (tenfold or more) slowdowns correspond to the cases where the lifting time dominates the solving time, which in practice can be avoided with an appropriate termination criterion for the lifting procedure.

\begin{figure}[ht]
\centering
\includesvg{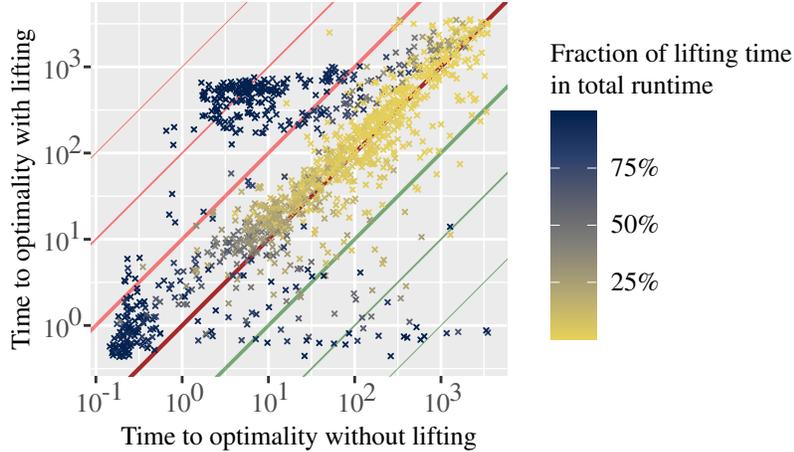}
\caption{Impact of using lifting on time to optimality. Every point corresponds to a pair of runs on the same instances, using the same underlying solver with the same search direction. Diagonal lines are evenly spaced and correspond to a tenfold relative change between the durations.}
\label{fig:time-to-opt}
\end{figure}

However, most of the instances were not solved to optimality with either approach, which means that making progress with further comparisons is more viable with primal and dual integrals as metrics. I proceed with examining this comparison on \Cref{fig:integrals-int:dual}, which supports the thesis that lifting performs as a helpful preprocessing routine: it substantially reduces the search effort on many instances, while not incurring a lot of \emph{search} overhead if it backfires. On the flip side (\Cref{fig:integrals-int:primal}), this is not true for primal integrals: in other words, adding new constraints does not help discover good solutions as consistently, but can hinder this process much worse.

\begin{figure}[ht]
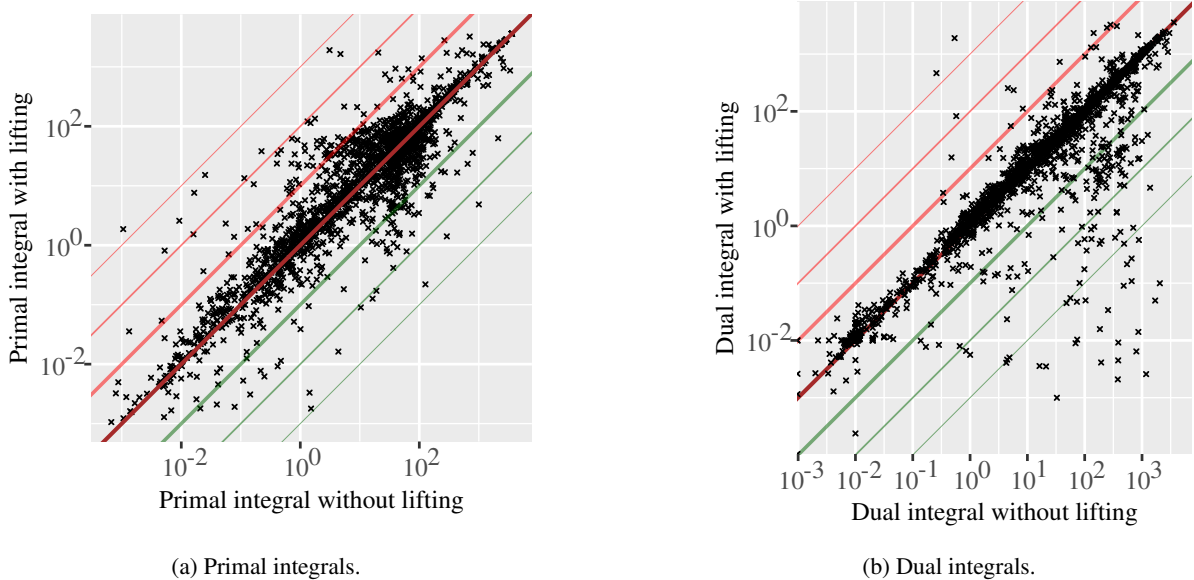

\centering
\subcaptionbox{Primal integrals.\protect\label{fig:integrals-int:primal}}[0.45\textwidth]{
    \resizebox{0.45\textwidth}{!}{
    \includesvg{experiments/primal-integrals-int}}
}
\hfill
\subcaptionbox{Dual integrals.\protect\label{fig:integrals-int:dual}}[0.45\textwidth]{
    \resizebox{0.45\textwidth}{!}{
    \includesvg{experiments/dual-integrals-int}}
}
\caption{Comparison of integrals between the baseline and lifting approach.}
\label{fig:integrals-int}
\end{figure}

These results already show an improvement over the \uniteandlead{}, since Sidorov et al. mention a much larger variance for dual integrals, especially on the degradation side. For example, they report that their approach degrades the search by at least an order of magnitude on many instances, whereas in the current evaluation, this is an \emph{exceptionally} rare event happening only in 19 pairs of runs. I investigate this in more detail in \Cref{table:degradation-quantiles}, which shows that even the “2× degradation” event is around the 95th percentile of observed degradations, whereas the opposite 5th percentile corresponds to 10× improvements for RCPSP benchmarks and to 50× improvements for RCPSP/max benchmarks.

\begin{table}[ht]
\caption{Percentiles of the observed distribution of the ratio of dual integrals between baseline runs and the corresponding runs with lifting across problem types.}
\label{table:degradation-quantiles}
\begin{tabular}{@{}cccccccc@{}}
\toprule
\textbf{Problem type} & \textbf{2.5\%} & \textbf{5\%} & \textbf{25\%} & \textbf{50\%} & \textbf{75\%} & \textbf{95\%} & \textbf{97.5\%} \\ \midrule
\textbf{RCPSP} & \( 2.1 \times 10^{-3} \) & 0.09 & 0.96 & 1.00 & 1.17 & 2.07 & 3.05 \\
\textbf{RCPSP/max} & \( 3.4 \times 10^{-3} \) & 0.02 & 0.71 & 1.03 & 1.30 & 1.98 & 2.42 \\ \bottomrule
\end{tabular}
\end{table}

I also observe that in many “high-performing” runs of lifting approach, the underlying constraints (or, at least, the one yielding the highest bound) commonly happened to be a \disjunctive{} constraint, that is, to have the right-hand side equal to one. In a similar vein, I investigate how the influence of lifting on the search depends on the best constraint, which is summarized in \Cref{fig:degradation-quantiles-rhs}. Indeed, the lifting runs that ended up on a \disjunctive{} constraint as the best one are responsible for the biggest relative gains; however, the runs that landed on a \textcumulative{} constraint still routinely produce large relative improvements, even if more modest. That said, discovering general \textcumulative{} constraints is only helpful when the capacity of the discovered constraint is not large; conversely, in the runs where the best constraint inferred through lifting had a capacity larger than eight units, the CP solvers have not been able to convert it into a significantly faster search.

\begin{figure}[ht]
\centering
\includesvg{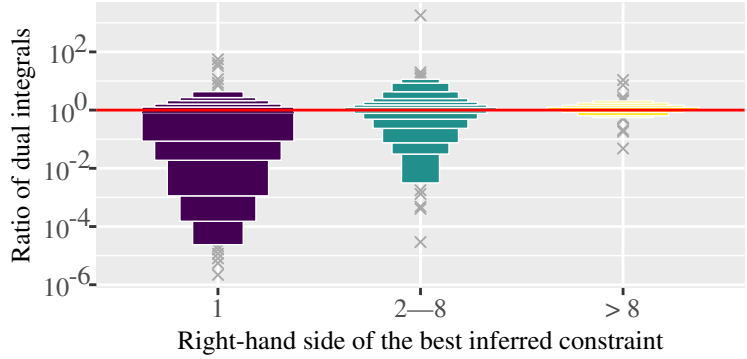}
\caption{Ratios of dual integrals between baseline runs and the corresponding runs with lifting across the right-hand side values of the best inferred constraint.}
\label{fig:degradation-quantiles-rhs}
\end{figure}

I present the runtime distribution of the preprocessing phase of the lifting approach (that is, \Cref{alg:cover-enumeration} and \Cref{alg:cumulative-generation} but not the CP solver) in \Cref{fig:preprocessing-time}. As can be seen on the histogram, under the budget imposed on lifting in this evaluation, all the lifting-specific work takes a few seconds in the majority of instances; on the other hand, the runs where lifting took more than a minute are covered exclusively by two collections with more than 500 tasks per instance, that is, UBO500 and UBO1000.

\begin{figure}[ht]
\centering
\includesvg{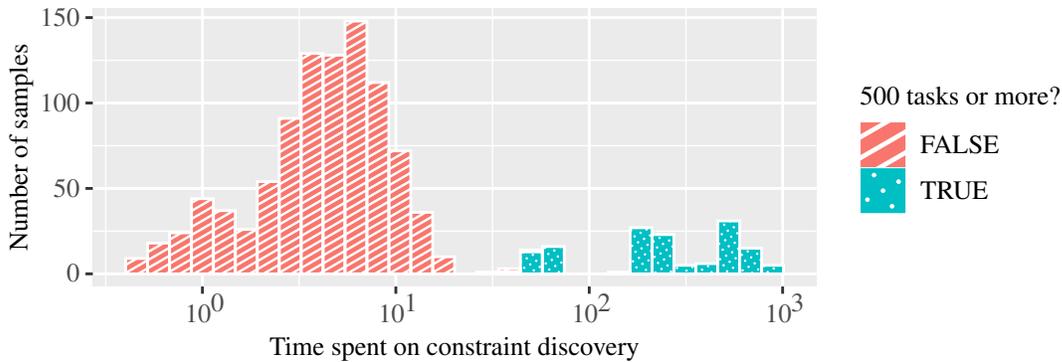}
\caption{Distribution of time required to run the inference of \textcumulative{} constraints.}
\label{fig:preprocessing-time}
\end{figure}

Last, I report the updated state-of-the-art bounds for these benchmarks discovered with the lifting approach in \Cref{appendix:bounds}, together with the criterion I used to choose which bounds to report. While many of the new bounds have been discovered during search, some are computed by taking the largest capacity bound among lifted constraints, thereby making them verifiable by inspecting the constraint (rather than retracing the reasoning made by a solver). While many such “search-less” bounds are encoded by \disjunctive{} constraints (and therefore could, in principle, have also been proven by \uniteandlead{} without any search), there is a bound for instance \#6 from the UBO1000 collection that improves upon the previously best known bound and is produced by a \textcumulative{} with the capacity of three units. I also note that no less than twelve of the instances in Pack and Pack-d collections~\citep{Carlier2003-eurJOperRes,Kone2011-computOperRes} can be closed immediately by using one of the lifted cumulative constraints, with the capacities varying between one and three.

\subsection{\texorpdfstring{Evaluation against \uniteandlead{}}{Evaluation against UɴL}}

I now proceed with the direct comparison with \uniteandlead{}; to this end, I run the following two solver configurations on each of the benchmark instances:
\begin{itemize}
    \item \uniteandlead{} with dual search in the same configuration as in Sidorov et al.
    \item Pumpkin with dual search, prefaced with lifting in the same configuration as in the previous subsection, \emph{but only considering covers with two elements}. This additional restriction is introduced to ensure that the lifting does not gain any advantage from discovering non-\disjunctive{} constraints.
\end{itemize}

As can be seen from \Cref{fig:unl-eval}, the proposed approach consistently outperforms \uniteandlead{} with respect to time to optimality for the applicable instances (\Cref{fig:unl-eval:time-to-opt}) and has varying performance and with respect to the dual integral (\Cref{fig:unl-eval:integrals}). This is explained by the fact that the lifting approach explicitly introduces \disjunctive{} constraints into the model, which are then used in specialized—and highly optimized—propagation algorithms. In contrast, \uniteandlead{} maintains all the facts around the disjointness of task pairs in a single graph and applies pruning for some of the cliques of that graph.

These plots also suggest that the lifting approach does not fully subsume \uniteandlead{}, as for some instances, lifting yields a notably worse performance. Interestingly, while many instances favorable for \uniteandlead{} are adequately explained by reasons unrelated to the methodology of this paper (such as the implementation of the preprocessing stage or discrepancies between the underlying solver versions), this is not true in all cases. For example, instance \#40 from the UBO50 collection is solved to optimality for \( \approx 30 \) seconds with \uniteandlead{} and for \( \approx 8 \) minutes with lifting, with \uniteandlead{} also doing 90× fewer constraint propagations until optimality. The reason for this is that the lifting approach chooses \( N_\mathrm{out} \) constraints with high capacity bounds (which in this case is the sum of durations) that also happen to share many variables, which translates into an insufficient extraction of the disjointness information; this reasoning is made more specific in \Cref{appendix:ubo50-40}.

\begin{figure}[ht]
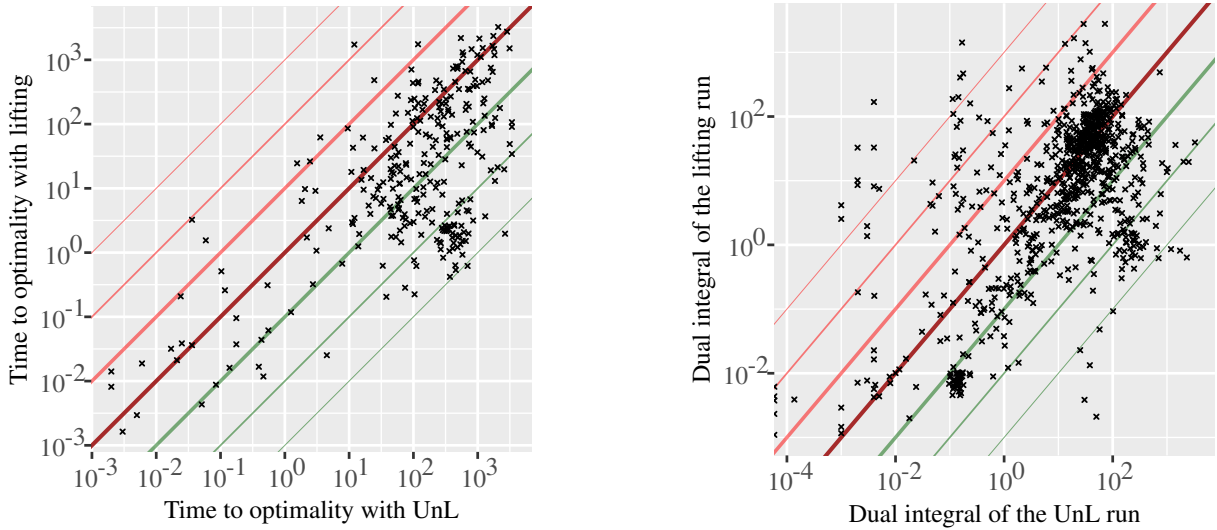

\centering
\subcaptionbox{Time to optimality for the instances solved to optimality with both approaches.\protect\label{fig:unl-eval:time-to-opt}}[0.45\textwidth]{
    \resizebox{0.45\textwidth}{!}{
    \includesvg{experiments/unl-time-to-opt}}
}
\hfill
\subcaptionbox{Dual integrals for all evaluated instances.\protect\label{fig:unl-eval:integrals}}[0.45\textwidth]{
    \resizebox{0.45\textwidth}{!}{
    \includesvg{experiments/unl-dual-integrals}}
}
\caption{Comparison between the lifting approach and \uniteandlead{}.}
\label{fig:unl-eval}
\end{figure}

\section{Related work}
\label{section:related-work}

The need to reason simultaneously about multiple resources has been recognized across optimization communities throughout the history of earlier developments. For instance, early exact approaches for RCPSP consistently employed global lower-bounding techniques that used the entire problem instance structure, such as the lower bounds mentioned by~\citet{Klein1999-eurJOperRes} based on solving relaxations to node packing (LB4) and parallel machine scheduling (LB5) problems. The presented approach can be seen as a generalization of these techniques in the following sense: \Cref{alg:cumulative-generation} with an appropriate choice of the input covers and an ordering of lifted variables can yield the same bounds as LB4 or LB5 implementations by letting a CP solver propagate the lifted \textcumulative{} constraints. However, unlike using LB4 or LB5 directly, the new \textcumulative{} remains known to a CP solver as one of the constraints rather than as an exogenous relaxation step, with any propagation algorithms for \textcumulative{} applicable to it. 

Within the CP community, the limitations of the “single-resource viewpoint” were also acknowledged for a long time. \citet{Beldiceanu2002-Other-01} introduced the \texttt{Cumulatives} constraint to capture multiple resources in a single constraint, which was later improved by faster filtering algorithms~\citep{Letort2013-Other, Letort2015-constraints}. While these works highlighted the need for multi-resource interaction, their focus was not on discovering new inference strategies but on running existing strategies faster, by replacing a “ping-pong” effect, where propagation for one \textcumulative{} triggers a propagation for another, with a singular pass accounting for all \textcumulative{} constraints. Conversely, the lifting approach can not only draw conclusions from several \textcumulative{} constraints at once but can also draw conclusions unreachable (without an exponential blowup) by reasoning over them in isolation.

Other efforts to mitigate the weaknesses of the single-resource viewpoint have focused on extracting stronger inferences from individual constraints by deriving non-dominated implied constraints. \citet{Carlier2007-eurJOperRes} investigated the systematic derivation of such constraints for a single \textcumulative{} by searching for one-to-one mappings from original resource consumptions to a reduced set of demands that preserve infeasibility while maximizing pruning. This logic was further refined by \citet{Baptiste2018-discreteApplMath} by examining the dual linear program of a preemptive relaxation of \textcumulative{} and introducing a mechanism for assigning a higher consumption level to a fixed number of tasks within a demand group. While these techniques strengthen the reasoning of a single resource, they remain fundamentally limited to intra-resource analysis. This work instead uses lifting as a tool for \emph{inter-resource} reasoning; rather than refining a single \textcumulative{} in isolation, the presented work derives global inequalities that capture the collective contention of tasks across the entire problem structure.

The earlier work by \citet{Sidorov2025-Other} implements a whole-problem reasoning scheme without merging constraints, opting instead to discover and communicate disjointness between tasks across resources. While \uniteandlead{} can theoretically emulate the node-packing bound (LB4) of Klein and Scholl, it lacks the expressive power to capture variable non-unit demands (LB5). Furthermore, it relies on managing auxiliary variables during search, which can be both a strength (by discovering a disjointness pattern conditional to a search state) and a hindrance (if there is no disjointness structure to be exploited). Conversely, the presented work compiles these interactions into conventional \textcumulative{} constraints before initiating any search by a CP solver.

The core mechanism that supports the contributions of this paper—lifting—originated in the integer programming community as a method for tightening cutting planes for knapsack constraints. First introduced by \citet{Padberg1975-operRes}, the technique iteratively strengthens an inequality by introducing variables not yet in the support. Later refinements, such as the ones proposed by \citet{Zemel1989-mathOperRes}, demonstrated that all sequential lifting coefficients could be computed efficiently via dynamic programming. Currently, lifting is a standard part of the modern MILP solver practice, as exemplified by the SCIP optimization suite~\citep{Hojny2025-arxivmathOc}, which extensively employs lifting procedures within its knapsack constraint handler to strengthen the linear relaxation of the problem.

In the context of scheduling, polyhedral methods have been originally applied to single-machine problems~\citep{Queyranne1993-mathProgram, Dyer1990-discreteApplMath}. For the general RCPSP, early polyhedral approaches~\citep{Olaguibel1993-eurJOperRes} focused on conflict variables (indicating a precedence within a pair of tasks) rather than start times, limiting their direct applicability in standard CP models. More recently, approaches combining CP propagators with LP relaxations have been explored, with CP-SAT~\citep{Perron2024-Other}, one of the state-of-the-art CP solvers, integrating scheduling cuts as special separation procedures for tightening LP relaxations. While scheduling cuts are constructed with a procedure similar to the one proposed here (choose a cover and lift it), those cuts are derived from \emph{energy}\footnote{The product of duration and resource consumption.} overflows; on the other hand, lifting as presented in this paper operates directly on the \emph{demands} via occupancy vectors, allowing us to view \textcumulative{} constraints as linear inequalities and apply standard lifting machinery directly to the start-time representation.

Collectively, these works support the thesis that the “single-resource viewpoint” is insufficient for complex scheduling benchmarks. While integer programming and specialized branch-and-bound approaches have long exploited whole-problem structures, CP has historically struggled to integrate these insights without sacrificing the modularity of propagators. By establishing a formal link between \textcumulative{} constraints and lifted linear inequalities, this work offers a pragmatic path to bring these powerful global inferences into CP solvers.

\section{Conclusions}
\label{section:conclusions}

This paper presents a novel approach for aggregating \textcumulative{} constraints by (i) reformulating them as linear constraints in the occupancy-vector representation, (ii) discovering sets of tasks that cannot be run jointly, and (iii) lifting the linear constraint blocking this group of tasks. The experimental evaluation shows that this approach is, at best, fundamentally helpful—or, at worst, not too obstructive—for proving objective bounds, although those advantages do not translate as well to discovering new high-quality solutions.

One possible direction for future work is a tighter coupling with the search loop, for instance, by adding \textcumulative{} constraints not \emph{before} the search but \emph{during} the search. The core technical problem here is that a \textcumulative{} lifted from a conflicting assignment does not necessarily propagate after backtracking, as observed in earlier work on linear constraint learning in CP~\citep{Nieuwenhuis2014-Other}. However, exploiting this procedure in a more heuristic way (e.g., during restarts) could be helpful to leverage the information learned during search.

Last, the running assumption in this work is that both durations and resource consumptions are constant. While this is valid for RCPSP and RCPSP/max, this is not true for several other scheduling problem formulations studied previously, most notably \emph{multi-mode} scheduling problems~\citep{Hartmann1998-networks}. However, extending this approach to multi-mode RCPSP—where task durations and demands are variable—will require handling the non-linearities resulting from the products of occupancy vectors and resource consumptions.

\section*{Acknowledgements}

I would like to thank Imko Marijnissen for the enlightening discussions that sparked the development of the technique presented in this paper.

\section*{Funding}

Konstantin Sidorov is supported by the TU Delft AI Labs program as part of the XAIT lab. 

\bibliographystyle{unsrtnat}
\bibliography{citations,citations-with-url}

\appendix

\section{Novel bounds}
\label{appendix:bounds}

I report the bounds discovered by the lifting approach if they are \emph{both} better than the previously reported bounds and are not directly reproducible without lifting. More precisely, I report bounds that are simultaneously (a) tighter than the bounds reported in the previous sources known to us that used the same benchmarks~\citep{Kolisch1997-eurJOperRes,Vilim2015-Other,Vilim2011-Other,StuckeyOther-Other,Sidorov2025-Other}, and (b) either tighter than any bound derived without preprocessing or matches it but was derived at least ten times faster than with any other approach.

Novel upper bounds (makespans) are reported in \Cref{table:novel-upper-bounds}, and novel lower bounds are reported in \Cref{table:novel-lower-bounds}; the same data is available as supplementary materials. All bounds are reported for RCPSP/max benchmarks; all reported durations are in \texttt{MM:SS} format. To indicate the remaining optimality gap, I also state the best known lower bound in \Cref{table:novel-upper-bounds} and the best known makespan in \Cref{table:novel-lower-bounds}; in either case, that bound is the tightest among the previously reported values and the values discovered without preprocessing. Additionally, \Cref{table:novel-root-lower-bounds} reports the lower bounds that can be verified by an appropriate lifted constraint.

\begin{table}[ht]
\centering
\caption{Novel upper bounds derived with lifting.}
\label{table:novel-upper-bounds}
\begin{tabular}{@{}cccccc@{}}
\toprule
\textbf{Collection} & \textbf{\#} & \textbf{Ref. objective} & \textbf{New objective} & \textbf{Time} & \textbf{Best bound} \\ \midrule
C & 63 & 366 & 363 & 43:50 & 347 \\
C & 67 & 350 & 349 & 23:56 & 346 \\ \midrule
UBO100 & 8 & 385 & 383 & 41:56 & 376 \\ 
UBO100 & 32 & 434 & 432 & 26:59 & 414 \\ \midrule
UBO200 & 4 & 893 & 838 & 29:25 & 605 \\
\bottomrule
\end{tabular}
\end{table}

\begin{table}[ht]
\centering
\caption{Novel lower bounds derived with lifting.}
\label{table:novel-lower-bounds}
\begin{tabular}{@{}cccccc@{}}
\toprule
\textbf{Collection} & \textbf{\#} & \textbf{Ref. bound} & \textbf{New bound} & \textbf{Time} & \textbf{Best objective} \\ \midrule
C & 63 & 343 & 348 & 43:08 & 363 \\
C & 66 & 340 & 347 & 49:16 & 368 \\
\midrule
UBO100 & 70 & 375 & 387 & 43:43 & 408 \\
\midrule
UBO200 & 2 & 682 & 731 & 46:25 & 813 \\
UBO200 & 3 & 482 & 542 & 30:47 & 906 \\
UBO200 & 5 & 499 & 559 & 57:28 & 767 \\
UBO200 & 6 & 538 & 569 & 38:29 & 765 \\
UBO200 & 8 & 505 & 583 & 48:53 & 911 \\
UBO200 & 32 & 753 & 794 & 51:45 & 863 \\
UBO200 & 33 & 784 & 785 & 56:14 & 834 \\
UBO200 & 34 & 595 & 673 & 35:16 & 774 \\
UBO200 & 65 & 728 & 747 & 35:22 & 814 \\
UBO200 & 70 & 724 & 823 & 57:43 & 877 \\
\midrule
UBO500 & 3 & 1159 & 1260 & 43:30 & 1808 \\
UBO500 & 6 & 1202 & 1350 & 01:05 & 2035 \\
UBO500 & 8 & 1175 & 1208 & 00:56 & 2212 \\
UBO500 & 38 & 1258 & 1398 & 01:12 & 2575 \\
\midrule
UBO1000 & 2 & 2321 & 2518 & 32:11 & 5479 \\
UBO1000 & 6 & 2311 & 2329 & 04:12 & 3508 \\
UBO1000 & 9 & 2274 & 2414 & 03:55 & 4231 \\
UBO1000 & 10 & 2367 & 2441 & 02:30 & 3702 \\
UBO1000 & 35 & 2382 & 2563 & 03:09 & 4737 \\
UBO1000 & 68 & 2478 & 2550 & 03:12 & 4688 \\
\bottomrule
\end{tabular}
\end{table}

\begin{table}[ht]
\caption{Novel \emph{search-less} lower bounds derived with lifting. The bounds marked with an asterisk have not been improved by a CP solver.}
\centering
\label{table:novel-root-lower-bounds}
\begin{tabular}{@{}ccccccc@{}}
\toprule
\textbf{Collection} & \textbf{\#} & \textbf{Ref. bound} & \textbf{New bound} & \textbf{Objective} & \textbf{Capacity} \\ \midrule
UBO200 & 3 & 482 & 531 & 906 & 1 \\
UBO200 & 4 & 514 & 583 & 838 & 1 \\
UBO200 & 5 & 499 & 533 & 767 & 1 \\
UBO200 & 6 & 538 & 558 & 765 & 1 \\
UBO200 & 8 & 505 & 559 & 911 & 1 \\ \midrule
UBO500 & 3 & 1159 & 1260* & 1808 & 1 \\
UBO500 & 4 & 1230 & 1259 & 2774 & 1 \\
UBO500 & 6 & 1202 & 1345 & 2035 & 1 \\
UBO500 & 8 & 1175 & 1328* & 2212 & 1 \\
UBO500 & 38 & 1258 & 1448* & 2575 & 1 \\ \midrule
UBO1000 & 2 & 2321 & 2515 & 5479 & 1 \\
UBO1000 & 6 & 2311 & 2315 & 3508 & 3 \\
UBO1000 & 9 & 2274 & 2418* & 4231 & 1 \\
UBO1000 & 10 & 2367 & 2482* & 3702 & 1 \\
UBO1000 & 35 & 2382 & 2568* & 4737 & 1 \\
UBO1000 & 68 & 2478 & 2536 & 4688 & 1 \\
\bottomrule
\end{tabular}
\end{table}

\section{Disjointness structure of instance \#40 from UBO50}
\label{appendix:ubo50-40}

As mentioned in the main text, \uniteandlead{} outperforms the lifting approach on this instance; this appendix provides a more specific explanation for this. \Cref{fig:ubo50-40} demonstrates the complement of the disjointness graph of that instance, with vertices corresponding to each of the fifty tasks that are scheduled in this instance, and edges connect pairs \( \{ u, v \} \) of vertices that can be executed in parallel (that is, \( a_{i,u} + a_{i,v} \le b_i \) for some \( i \)). Vertex colors are assigned as follows: \colorbox{red!30}{red} vertices correspond to tasks shared between all constraints added by lifting, \colorbox{blue!30}{blue} vertices correspond to tasks that are mentioned in some (but not all) constraints added by lifting, and \colorbox{green!30}{green} vertices correspond to tasks not mentioned in any lifted constraint.

The important insight in the structure of this graph is that it is (a) very sparse, with a random pair of tasks having 87\% probability of being disjoint, and (b) the green vertices are loosely constrained by the rest of the graph; to justify the latter, I observe that 235 out of 262 maximal independent sets of this graph involve a green vertex. This suggests that \uniteandlead{} is able to exploit all of the graph structure, which opens the inferences about the tasks corresponding to the green vertices that are not available to the CP solver after the lifting procedure concludes.

On the other hand, this shows the insufficiency of the selection step based solely on a quality metric of the \textcumulative{} constraint. In this example, the lifting procedure chooses many independent sets (equivalently, \disjunctive{} constraints) that contain all the red vertices and have a large total duration of tasks. Taken on a per-constraint basis, this is a good decision, as this set covers 40\% of the problem variables; unfortunately, none of the green vertices can be added to this set. I believe that resolving the issue demonstrated by this instance would require a more elaborate selection procedure that rewards the \emph{overall} coverage of the space of known conflicts, rather than the quality of each of the added constraints in isolation.

\begin{figure}[ht]
\centering
\includesvg[width = 0.66\textwidth]{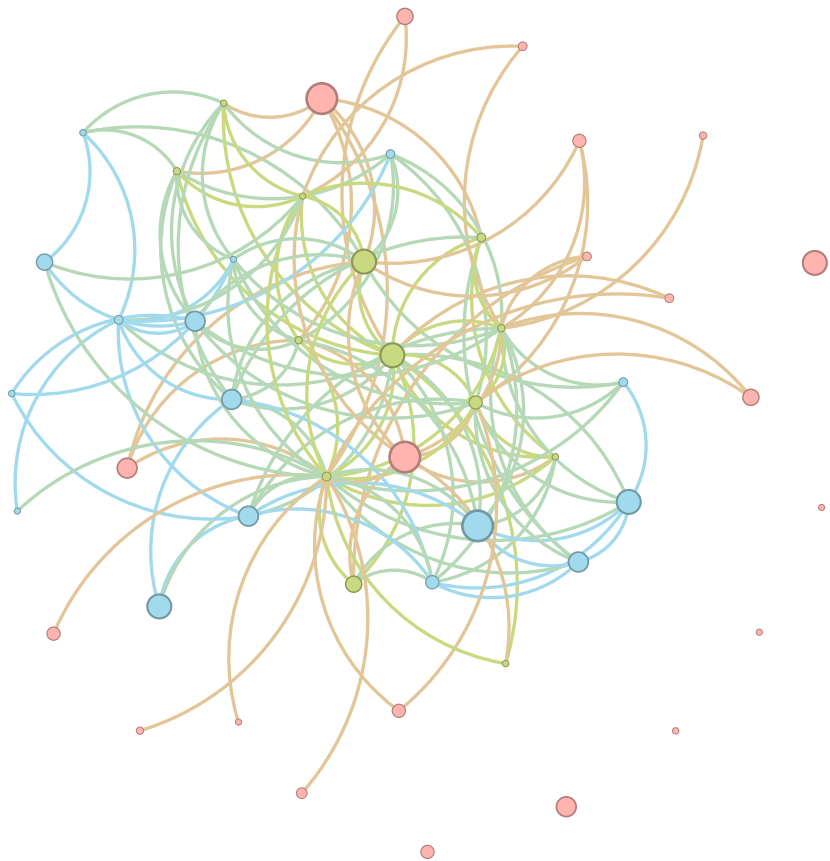}
\caption{The complement of the disjointness graph of instance \#40 from UBO50; vertex color encodes the number of lifted constraints which mention the corresponding task, and the area occupied by a vertex is proportional to the task duration.}
\label{fig:ubo50-40}
\end{figure}

\end{document}